\lstdefinestyle{compactpython}{
    language=Python,                     
    basicstyle=\ttfamily\scriptsize,     
    keywordstyle=\color{blue},           
    commentstyle=\color{green!50!black}, 
    stringstyle=\color{red},             
    showstringspaces=false,              
    frame=none,                          
    breaklines=true,                     
    breakatwhitespace=true,              
    tabsize=1,                           
    xleftmargin=0em,                      
    morekeywords=[2]{critic,reward,lm, logsigmoid},        
    keywordstyle=[2]\color{red},      
    morekeywords=[3]{wd},        
    keywordstyle=[3]\color{magenta},      
}
\newcommand\allbold[1]{{\boldmath\textbf{#1}}}
\definecolor{expert}{HTML}{008000}
\definecolor{error}{HTML}{f96565}
\definecolor{learner}{HTML}{F79646}
\definecolor{perfblue}{RGB}{64, 114, 175}
\definecolor{hgreen}{RGB}{217, 234, 211}
\definecolor{hred}{RGB}{244, 204, 204}
\definecolor{hgray}{RGB}{200, 200, 200}
\definecolor{lightgray}{gray}{0.9}
\definecolor{lightgreen}{rgb}{0.8,1,0.8}
\definecolor{perfblue}{RGB}{64, 114, 175}
\definecolor{perfred}{RGB}{220, 60, 60}
\newcommand{\cellgreen}{\cellcolor{hgreen}}
\newcommand{\method}{\texttt{DIAL}\xspace}
\newtheorem{theorem}{Theorem}
\newtheorem{lemma}{Lemma}
\theoremstyle{remark}
\newtheorem*{proof*}{Proof}
\newcommand\numberthis{\addtocounter{equation}{1}\tag{\theequation}}
\newcommand{\figGap}[0]{\vspace{-\baselineskip}}
\title{Aligning LLMs with Domain Invariant Reward Models}
\author{David Wu \\
  \texttt{david9dragon9@gmail.com} \\\And
  Sanjiban Choudhury \\
  Cornell University \\
  \texttt{sc2582@cornell.edu} \\}
\begin{document}
\maketitle
\begin{abstract}
Aligning large language models (LLMs) to human preferences is challenging in domains where preference data is unavailable.
We address the problem of learning reward models for such target domains by leveraging feedback collected from simpler source domains, where human preferences are easier to obtain.
Our key insight is that, while domains may differ significantly, human preferences convey \emph{domain-agnostic} concepts that can be effectively captured by a reward model.
We propose \method, a framework that trains domain-invariant reward models by optimizing a dual loss: a domain loss that minimizes the divergence between source and target distribution, and a source loss that optimizes preferences on the source domain.
We show \method is a general approach that we evaluate and analyze across 4 distinct settings: (1) Cross-lingual transfer (accuracy: $0.621 \rightarrow 0.661$), (2) Clean-to-noisy (accuracy: $0.671 \rightarrow 0.703$), (3) Few-shot-to-full transfer (accuracy: $0.845 \rightarrow 0.920$), and (4) Simple-to-complex tasks transfer (correlation: $0.508 \rightarrow 0.556$).
Our code, models and data are available at \url{https://github.com/portal-cornell/dial}.

\end{abstract}

\section{Introduction}
Reinforcement Learning from Human Feedback (RLHF) has emerged as a popular paradigm for aligning language models~\citep{Ouyang2022TrainingLM, dubey2024llama}. This approach involves training and optimizing a reward model that learns human preferences. However, the effectiveness of RLHF is limited by the ability to collect high-quality feedback data. As tasks become more complex, they require greater human expertise and time, making it harder for humans to supervise and provide feedback~\citep{leike2018scalable}.

We address the problem of learning reward models for target domains that lack human preference feedback. While feedback is unavailable in the target domain, it is often easy to collect on related source domains. For example, extensive preference data is available in English (source domain) across various tasks, whereas low-resource languages (target domain) may have little to no labeled data~\citep{costa2022nllb}. Similarly, preferences are easier to collect for simpler tasks, like rating article summaries~\cite{KaggleArgument}, compared to more complex tasks, such as evaluating full-length articles~\citep{KaggleEssay}.

\begin{figure*}[t]
\centering
\includegraphics[width=0.9\linewidth]{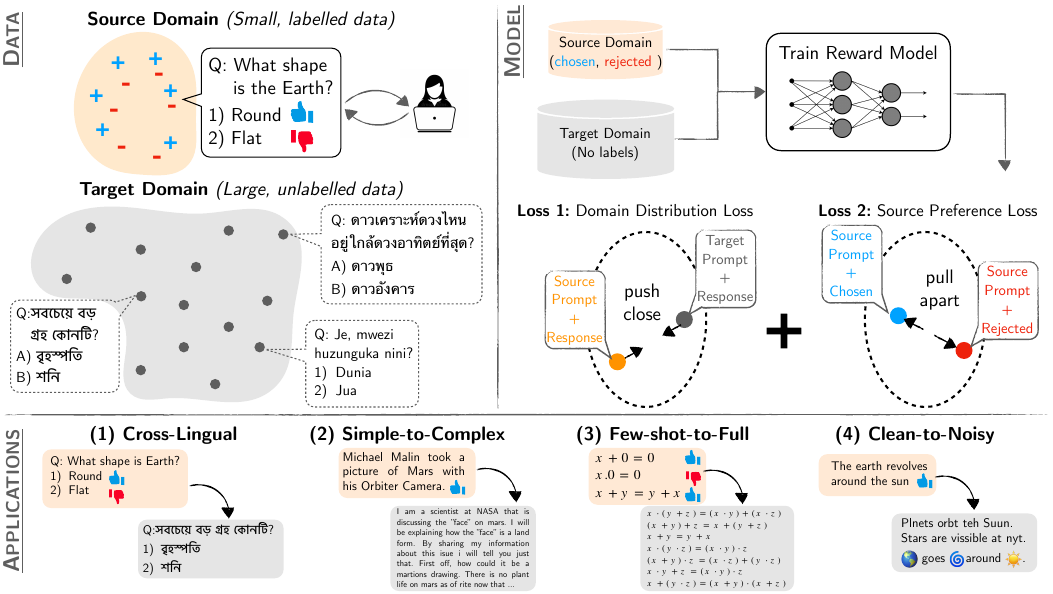}
\caption{\method trains domain-invariant reward model for target domains with no labeled preference data. \method leverages labeled source data and unlabeled target data to train reward models on a dual loss: a domain loss that minimizes the divergence between source and target distribution, and a source loss that optimizes preferences on the source domain. 
We show \method is a general approach that we evaluate and analyze across 4 distinct applications: (1) Cross-lingual transfer, (2) Clean-to-noisy, (3) Few-shot-to-full transfer, and (4) Simple-to-complex tasks transfer.\figGap }
\label{fig:tasks}
\end{figure*}

Prior works address the problem of no target domain data through methods such as regularizing with a text-generation loss~\citep{Yang2024RegularizingHS,zhang2024generative}, pre-training on unlabeled target data~\citep{karouzos2021udalm}, or few-shot prompting~\citep{winata2022cross}. However, both regularization and pre-training are surrogate objectives that don't guarantee the reward model learns the correct preferences on the target domain. Finally, few-shot learning is often sensitive to the choice of examples, leading to high variance in performance.

Our key insight is that, \textbf{while domains may differ significantly, human preferences convey \emph{domain-agnostic} concepts that can be effectively captured by a reward model.}
By designing the reward model to disentangle domain-specific features from concepts, we enable transfer across domains.
To achieve this, we train on a dual loss: a domain loss that minimizes divergence between source and target distributions, and a source loss that learns preferences on the source domain.

We propose \textbf{D}omain \textbf{I}nvariant \textbf{A}lignment for \textbf{L}anguage (\method), a framework for training domain-invariant reward models. \method takes as input labeled source data and unlabeled target data and trains a language model with two heads -- a critic head and a reward head. The critic head is trained adversarially on a domain loss to minimize the Wasserstein distance (WD)~\cite{Arjovsky2017WassersteinG} between source and target embeddings while the reward head minimizes a source loss that optimizes preferences on the source data. Effectively, the domain loss aligns source and target embeddings, while the source loss separates chosen and reject embeddings, encouraging the reward model to learn preferences in a domain-invariant manner.

We demonstrate that \method is a general approach that can be used to do domain transfer in multiple different paradigms. Specifically, we evaluate and analyze the following settings: \textit{(a) Cross-lingual Transfer:} transferring preferences from a high-resource language (e.g., English) to a low-resource language (e.g., Korean); \textit{(b) Clean-to-Noisy Transfer:} adapting preferences from clean, structured data to noisy, real-world data (e.g., internet text with slang or emojis), (\textit{c) Few-shot-to-full Transfer:} leveraging limited labeled examples to generalize across a broader, unlabeled target distribution, and (\textit{d) Simple-to-complex Transfer:} aligning preferences from simpler tasks (e.g., short texts) to more challenging tasks (e.g., long-form content). Our key contributions are:
\begin{enumerate}[nosep, leftmargin=0.1in]
\item A novel framework, \method, for training domain-invariant reward models. \method transfers preferences from labeled source to unlabeled target domains.
\item Theoretical and empirical performance analysis of
\method reward models on target domain distributions, including scaling 
\item Evaluation across four distinct applications:
    \begin{enumerate}[nosep, leftmargin=0.1in]
        \item Cross-lingual transfer from English to $3$ languages on Stanford Human Preference~\cite{pmlr-v162-ethayarajh22a} (accuracy: $0.621 \rightarrow 0.661$).
        \item Clean-to-noisy transfer from grammatically correct to noisy internet posts on corrupted Stanford Human Preference~\cite{pmlr-v162-ethayarajh22a} dataset (accuracy: $0.671 \rightarrow 0.703$).
        \item Few-shot-to-full transfer from 10 examples on CValues~\cite{xu2023cvalues} safety dataset (accuracy: $0.845 \rightarrow 0.920$).
        \item Simple-to-complex transfer from scoring short argument fragments to long student essays on Kaggle~\cite{KaggleEssay} (correlation: $0.508 \rightarrow 0.556$).
    \end{enumerate}
\end{enumerate}

\begin{figure*}[!t]
\centering
\includegraphics[width=\linewidth]{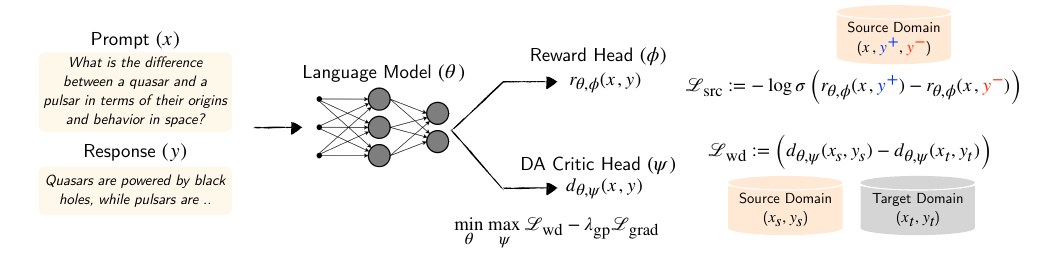}
\caption{\textbf{\method overview.} \method takes labeled source and unlabeled target data and trains a domain-invariant reward model. The model takes prompt ($x$) and response ($y$), passes it through a base language model ($\theta$) with two heads: a domain critic head ($\psi$) and a reward head ($\phi$). The critic head is trained adversarially to minimize the Wasserstein distance between source and target embeddings while the reward head optimizes preferences on source data. \figGap}
\label{fig:da_method}
\end{figure*}

\section{Approach}
We present Domain Invariant Alignment for Language (\method), a framework for aligning large language models across domains where human preference feedback data is unavailable. Given a small labeled preference dataset on a source domain $\mathcal{D_{\rm src}} = \{ (x, y^+, y^-)\}$ and a large unlabeled dataset on a target domain $\mathcal{D_{\rm tgt}} = \{x, y\}$, \method trains a domain-invariant reward model $r_\theta(x,y)$ to achieve strong performance on both source and target domains. To achieve this, we train on a dual loss: a domain loss that minimizes the Wasserstein Distance between source and target distribution, and a source loss that optimizes preferences on the source domain. Sec.~\ref{sec:approach:reward_model} defines the domain-invariant reward model and Sec.~\ref{sec:approach:alg} details \method. 

\subsection{Domain-invariant Reward Model}
\label{sec:approach:reward_model}
We introduce a domain-invariant reward model that enables transferring preferences from a labeled source domain to an unlabeled target domain. The model takes a base LLM, removes the final unembedding layer, and adds two scalar output heads: a domain critic head and a reward head. Fig.~\ref{fig:da_method} provides an overview of our architecture.

\paragraph{Domain Critic Head.}
The domain critic head, denoted as $d_{\theta, \psi}(x, y)$, maps a prompt $x$ and response $y$ to a scalar score, which is used to measure the distributional distance between source and target distributions. It takes the embedding of $(x,y)$ from the language model $\theta$, and passes it through an MLP head $\psi$ to compute a scalar score.

We choose the Wasserstein distance~\citep{Arjovsky2017WassersteinG}, a metric that measures the minimum cost of transporting one probability distribution to match another. Unlike KL divergence, which requires overlapping supports and can be undefined when distributions do not overlap, the Wasserstein distance provides a meaningful comparison even when the distributions have disjoint supports, making it well-suited for domain adaptation. The 1-Wasserstein distance can be expressed in its dual formulation~\citep{villani2009optimal} as:
\begin{equation} 
W_1(\mathbb{P}, \mathbb{Q}) = \sup_{ ||f||_L \leq 1} \mathbb{E}_{z \sim \mathbb{P}}[f(z)] - \mathbb{E}_{z \sim \mathbb{Q}}[f(z)]
\end{equation} 
 where $|f||_L \leq 1$ is the set of 1-Lipschitz functions.

We train the critic $\psi$ to approximate the Wasserstein distance between source and target distributions by maximizing the expected score difference between source and target:
\begin{align*}
\max_\psi \mathcal{L}_{\rm wd}(\theta, \psi) = & \mathbb{E}_{(x_s, y_s) \sim \mathcal{D}_{\rm src}} \left[ d_{\theta, \psi}(x_s, y_s) \right] - \\ 
& \mathbb{E}_{(x_t, y_t) \sim \mathcal{D}_{\rm tgt}} \left[ d_{\theta, \psi}(x_t, y_t) \right] \numberthis
\end{align*}
where $d_{\theta, \psi}(x, y)$ models the feature functions $f$. To ensure that $d_{\theta, \psi}(x, y)$ satisfies the Lipschitz constraint, we impose a gradient penalty~\citep{Gulrajani2017ImprovedTO} on the critic: 
\begin{equation}
\mathcal{L}_{\rm grad}(\psi) = \mathbb{E}_{(x,y)} \left[\big(\|\nabla_{x,y} d_{\theta, \psi}(x, y)\| - 1\big)^2\right] 
\end{equation}
where critic gradients are penalized not only at source and target embeddings, but at random interpolates between the two. The critic maximizes a weighted difference of $\mathcal{L}_{\rm wd} - \lambda_{gp} \mathcal{L}_{\rm grad}$.

We then update the language model embeddings $\theta$ to minimize the Wasserstein distance, i.e., $\min_\theta \mathcal{L}_{\rm wd}(\theta, \psi)$ while keeping the critic frozen. 
This results in the following adversarial game between the critic and the language model:
\begin{equation}
\min_\theta \max_\psi \mathcal{L}_{\rm wd} (\theta, \psi) - \lambda_{gp} \mathcal{L}_{\rm grad} (\psi)
\end{equation}

The equilibrium of the game is reached if the language model $\theta$ finds embeddings where the source and target data are indistinguishable, thus being domain-invariant.

\paragraph{Reward Head.}
The reward head, denoted as $r_{\theta, \phi}(x, y)$, maps a prompt $x$ and response $y$ to a scalar reward. It takes the embedding of $(x,y)$ from the language model $\theta$, and passes it through a linear head $\phi$ to compute a scalar reward~\cite{Ouyang2022TrainingLM}. Given a labeled source preference dataset $\mathcal{D}_{\rm source} = (x, y^+, y^-)$, we train both the reward head and the embedding using a Bradley-Terry model~\citep{Bradley1952RankAO} on the following source loss:
\begin{equation}
    \begin{aligned}
         \mathcal{L}_\text{src}(\theta, \phi) = \mathbb{E}_{(x, y^+, y^-) \sim \mathcal{D}_{\rm src}} 
    \big[ \log \sigma \big(& r_{\theta,\phi}(x, y^+) \\
    -& r_{\theta,\phi}(x, y^-) \big) \big]
    \end{aligned}
\end{equation}
where $\sigma$ is the sigmoid function. This loss encourages the reward head to maximize the difference in rewards between preferred and rejected responses.

\subsection{\method algorithm}
\label{sec:approach:alg}
\begin{algorithm}[!htbp]
\caption{\method: \small Learning Domain Invariant Rewards}
\label{alg:training}
\begin{lstlisting}[style=compactpython]
# Inputs:
#  mixed_dataloader: Yields source and target data
#  lm: Language model producing embeddings
#  critic: MLP critic computing Wasserstein Distance
#  reward: Linear reward head 
for mixed_batch in mixed_dataloader:
  # Load labeled source, unlabeled target
  src_chosen, src_reject, tgt_all = mixed_batch
  src_all = cat([src_chosen, src_reject])

  # Critic maximizes source-target dist 
  lm.requires_grad_(False)
  src_emb, tgt_emb = lm(src_all), lm(tgt_all)
  wd = (critic(src_emb) - critic(tgt_emb)).mean()
  gp_loss = grad_penalty(critic, src_emb, tgt_emb)
  critic_loss = -wd + gp_loss

  # Embeddings minimize source-target dist 
  lm.requires_grad_(True)
  critic.requires_grad_(False)
  da_loss = wd

  # Rewards minimize source preference loss
  ch_emb, rj_emb = lm(src_chosen), lm(src_reject)
  ch_rew, rj_rew = reward(ch_emb), reward(rj_emb)
  src_loss = -F.logsigmoid(ch_rew - rj_rew).mean()

  total_loss = src_loss + da_loss + critic_loss
\end{lstlisting}
\end{algorithm}

Algorithm~\ref{alg:training} describes the \method algorithm. At every iteration, the critic head is updated to maximize the Wasserstein distance between source and target embeddings while enforcing a Lipschitz constraint using a gradient penalty. Next, the language model is updated to minimize the Wasserstein distance, aligning the source and target embeddings. Finally, the reward head minimizes a preference loss on the source data to separate chosen and rejected responses. By alternating between these updates, \method learns a reward model that transfers preferences from the source to the target domain.

\subsection{Theoretical Analysis}
We now analyze the generalization properties of the \method reward model $r(x, y)$. Let $f(x,y)$ denote the ground-truth reward function which assigns scores to prompt-response pairs. To measure the alignment between $r$ and $f$, we consider pairwise preferences derived from triplets $(x, y, y')$, where $y, y'$ are two responses to the prompt $x$. The preference induced by $f$ is $f(y_{\rm{win}}) \geq f(y_{\rm{loss}})$.
The error of reward model $r$ in a domain $\mathcal{D}$ is the expected disagreement between $r$ and $f$ on a distribution $\mathcal{D}$ defined using the Bradley-Terry loss:
\begin{equation}
    \epsilon_\mathcal{D}(r, f) = \mathbb{E}_{(x, y, y') \sim \mathcal{D}} \big[ \sigma \big( r(x, y_{\text{loss}}) - r(x, y_{\text{win}}) \big) \big]
\end{equation}
where $\sigma(z) = \frac{1}{1 + e^{-z}}$ is the sigmoid function. This error measures the probability that $r$ disagrees with $f$, with $\epsilon_\mathcal{D}(r, f) \to 0$ as $r$ aligns perfectly with $f$.

We show that performance of \method reward on the target domain is bounded by sum of performance on the source domain and the Wasserstein distance between source and target:
\begin{theorem}
\label{theo:generalization_bound}
Let $r$ be a $K$-Lipschitz function. Then the target 
domain error $\epsilon_T(r, f)$ satisfies:
\begin{equation}
    \epsilon_T(r, f) \leq \epsilon_S(r, f) + 2K L_\sigma W_1(\mu_S, \mu_T),
\end{equation}
where $W_1(\mu_S, \mu_T)$ is the Wasserstein-1 distance between the source and target distributions $\mu_S$ and $\mu_T$ over $(x, y)$, and $L_\sigma=\frac{1}{4}$ is the Lipschitz constant of $\sigma$.
\end{theorem}

See Appendix~\ref{appendix:theory} for the detailed proof. The two terms on the right hand side correspond to the source and domain loss in \method. By minimizing the sum, \method bounds the target performance. 

\begin{table*}[!t]
\centering
\resizebox{\textwidth}{!}{%
\begin{tabular}{lccccccccc}
\toprule
 & \multicolumn{3}{c}{\texttt{legaladvice}} & \multicolumn{3}{c}{\texttt{askscience}} & \multicolumn{3}{c}{\texttt{explainlikeimfive}} \\
\cmidrule(lr){2-4} \cmidrule(lr){5-7} \cmidrule(lr){8-10}
\textbf{Method} & \textbf{Korean} & \textbf{Thai} & \textbf{Chinese} & \textbf{Korean} & \textbf{Thai} & \textbf{Chinese} & \textbf{Korean} & \textbf{Thai} & \textbf{Chinese} \\
\midrule
\texttt{Base LM} & $0.58$ & $0.57$ & $0.60$ & $0.57$ & $0.58$ & $0.55$ & $0.55$ & $0.54$ & $0.51$ \\
\texttt{Src-Pref} & $0.60$ & $0.63$ & $0.61$ & $0.57$ & $0.62$ & $0.59$ & $0.65$ & $0.63$ & \cellgreen $\mathbf{0.68}$ \\
\texttt{Src-Pref-SFT}~\textcolor{perfred}{[1]} & $0.62$ & $0.59$ & $0.64$ & $0.56$ & $0.61$ & $0.56$ & $0.65$ & $0.64$ & $0.61$ \\
\texttt{Src-Pref-Tgt-NTP}~\textcolor{perfred}{[2]} & $0.64$ & $0.56$ & $0.65$ & $0.57$ & $0.63$ & $0.61$ & $0.64$ & $0.64$ & $0.61$ \\ 
\midrule
\method (ours) & \cellgreen $\mathbf{0.68}$ & \cellgreen $\mathbf{0.66}$ & \cellgreen $\mathbf{0.68}$ & \cellgreen $\mathbf{0.63}$ & \cellgreen $\mathbf{0.68}$ & \cellgreen $\mathbf{0.62}$ & \cellgreen $\mathbf{0.68}$ & \cellgreen $\mathbf{0.65}$ & $0.67$ \\
\midrule
\rowcolor{hgray} \texttt{Tgt-Pref*} & $0.69$ & $0.66$ & $0.67$ & $0.62$ & $0.67$ & $0.62$ & $0.67$ & $0.65$ & $0.68$ \\
\rowcolor{hgray} \texttt{Src-Tgt-Pref*} & $0.69$ &  $0.72$ &  $0.70$ & $0.63$ & $0.67$ & $0.64$ &  $0.70$ & $0.67$ &  $0.69$ \\
\bottomrule
\end{tabular}
}
\caption{\textbf{Cross-lingual Transfer.} Accuracy results of reward models trained on source data (English) and evaluated on target data (Korean/Thai/Chinese) on three splits of Stanford Human Preference Dataset~\citep{pmlr-v162-ethayarajh22a}, each 1K. Results are averaged over 3 seeds. \method outperforms all baselines, including \textcolor{perfred}{[1]}~\cite{Yang2024RegularizingHS} and \textcolor{perfred}{[2]}~\cite{karouzos2021udalm} \figGap}
\label{tab:cross_lingual}
\end{table*}

\section{Experiments}
\subsection{Experiment Setup}
\paragraph{Baselines.} 
We compare against various baselines. \texttt{Src-Pref} is a reward model trained on preference data on the source domain.  \texttt{Src-Pref-SFT}~\cite{Yang2024RegularizingHS} trains a reward model on source preference data, and additionally regularizes the base model with SFT loss on chosen responses on the source data. 
\texttt{Src-Pref-Tgt-NTP}~\cite{karouzos2021udalm} trains a reward model on source preference data, and additionally regularizes the base model with a pre-training task on both prompt and response on target data.\footnote{The original paper uses a masked language modeling task. In order to make it comparable for our decoder-only models, we used next token prediction instead of masked language modeling.} 
\texttt{Base LM} is a generative baseline that prompts the base model to choose from multiple responses, leveraging chain-of-thought.
We also include two oracle baselines: \texttt{Tgt-Pref*} trains a reward model on target preference data, \texttt{Src-Tgt-Pref*} trains a reward model on both source and target preference data.

\paragraph{Model.} 
We use a base model Gemma-2b~\citep{Mesnard2024GemmaOM} with a learned linear reward head and a 2-layer MLP critic head. We fine-tuned with LoRA~\citep{Hu2021LoRALA}. See Appendix~\ref{appendix:general} for a more detailed model architecture.

\paragraph{Metrics.} We measure accuracy of the reward model on preference datasets that contain prompt, chosen, and rejected responses. See Appendix~\ref{appendix:details} for detailed table with variances.

\subsection{Application 1: Cross-lingual Transfer}

\paragraph{Setup.}
We first look at cross-lingual transfer where preferences exist in a high-resource source domain but must be transferred to a low-resource target domain with costly labels.
Stanford Human Preferences~\citep{pmlr-v162-ethayarajh22a} (SHP) is a dataset consisting of questions from Reddit and pairs of preferred and non-preferred answers. We select 3 diverse subreddits with train/test splits: \texttt{legaladvice} (20K/1K), \texttt{explainlikeimfive} (20K/1K) and \texttt{askscience} (13K/1K). To evaluate cross-lingual transfer, we translate data points using NLLB~\citep{team2022NoLL} to 3 low-resource languages: Korean, Thai, Chinese. See Appendix~\ref{sec:appendix:cross}.

\paragraph{Results.}
Table.~\ref{tab:cross_lingual} shows that \method outperforms baselines on all subreddits and languages. 
The performance improvements on Korean ($0.57\rightarrow0.63$) and Thai ($0.62\rightarrow0.68$) are stronger than Chinese ($0.59\rightarrow0.62$), likely due to Chinese being more common in training datasets of the base LM. Performance improvements on subreddits \texttt{legaladvice} ($0.63\rightarrow0.66$) and \texttt{askscience} ($0.62\rightarrow0.68$) are stronger than \texttt{explainlikeimfive} ($0.64\rightarrow0.65$), likely because legal advice and ask science may rely on more advanced terminology that requires more alignment.
We note the single case of \texttt{explainlikeimfive}-Chinese where \method does not improve over baselines. This is because training on source already matches oracle performance, leaving little to no room for improvement in transfer ability using domain adaptation. 
Finally, we note that on many subreddits/language \method reaches oracular performance of \texttt{Src-Tgt-Pref*}.

\subsection{Application 2: Clean-to-noisy Transfer}

\begin{table}[t]
\centering
\resizebox{\columnwidth}{!}{%
\begin{tabular}{lccc}
\toprule
\textbf{Method} & \textbf{Legal} & \textbf{Science} & \textbf{ELI5} \\
\midrule
\texttt{Base LM}  & $0.55$ & $0.56$ & $0.55$ \\
\texttt{Src-Pref} & $0.71$ & $0.63$ & $0.67$ \\
\texttt{Src-Pref-SFT}~\textcolor{perfred}{[1]} & $0.71$ & $0.61$ & $0.64$ \\
\texttt{Src-Pref-Tgt-NTP}~\textcolor{perfred}{[2]} & $0.70$ & $0.61$ & $0.63$ \\ \midrule
\method (ours) & \cellgreen  \allbold{$0.76$} & \cellgreen 
 \allbold{$0.65$} & \cellgreen  \allbold{$0.70$} \\
\midrule
\rowcolor{hgray} \texttt{Tgt-Pref*} & $0.77$ & $0.67$ & $0.74$ \\
\rowcolor{hgray} \texttt{Src-Tgt-Pref*} & $0.78$ & $0.69$ & $0.73$ \\
\bottomrule
\end{tabular}}
\caption{\textbf{Clean-to-noisy Transfer.} Accuracy results of reward models trained on clean data and evaluated on noisy data from three splits of SHP dataset~\citep{pmlr-v162-ethayarajh22a}, each 1K. Results averaged over 3 seeds. \method outperforms all baselines, including \textcolor{perfred}{[1]}~\cite{Yang2024RegularizingHS} and \textcolor{perfred}{[2]}~\cite{karouzos2021udalm} \figGap}
\label{tab:noise}
\end{table}

\paragraph{Setup.} 
We next look at the application where the source domain is clean, high-quality synthetic data, but the target domain is noisy, real-world data, such as the internet. To emulate this, we selected the same three splits from Stanford Human Preferences~\citep{pmlr-v162-ethayarajh22a} (SHP): \texttt{legaladvice}, \texttt{explainlikeimfive}, and \texttt{askscience}. We used the original test data (1K each) which is already noisy. We create a ``clean'' dataset (20K/20K/13K) by rewriting the data to be formal using Gemma-2-9b-it~\citep{Riviere2024Gemma2I}.  The real-world data contains significant noise in the form of spelling, grammar, and language errors. See Appendix~\ref{sec:appendix:noise}.

\paragraph{Results.} 
Table~\ref{tab:noise} shows that \method outperforms baselines on all 3 splits, matching oracle accuracy on \texttt{legaladvice}. The gain on \texttt{ELI5} is the least as the rewritten data is also fairly informal given the nature of ELI5. Finally, we note that baseline \texttt{Src-Pref-Tgt-NTP} performs similar or slightly worse than \texttt{Src-Pref}, likely because the base LLM has been pretrained on noisy internet data, but is insufficient for reward generalization.

\subsection{Application 3: Few-shot-to-full Transfer}
\paragraph{Setup.} We next look at the application where the source data is a set of few-shot labeled examples and the target data is unlabeled data from the same distribution. This is often the case when labeling data might be expensive as it requires domain experts. We select the CValues safety dataset~\citep{xu2023cvalues} and sample three splits of $10$ examples as source data. We train on 130K unlabelled target data and evaluate on 7.5K labeled data. Our goal is to test \method under low labeled data regimes and quantify performance gains over \texttt{Src-pref} when source and target domain are already aligned. See Appendix~\ref{sec:appendix:few}.

\begin{table}[t]
\centering
\resizebox{\columnwidth}{!}{%
\begin{tabular}{lccc}
\toprule
\textbf{Method} & \textbf{SplitA} & \textbf{SplitB} & \textbf{SplitC} \\
\midrule
\texttt{Few-shot LM} & $0.60$ & $0.64$ & $0.65$  \\ 
\texttt{Src-Pref} & $0.82$ & $0.86$ & $0.85$  \\
\texttt{Src-Pref-SFT}~\textcolor{perfred}{[1]} & $0.74$ & $0.79$ & $0.71$  \\
\texttt{Src-Pref-Tgt-NTP}~\textcolor{perfred}{[2]} & $0.76$ & $0.72$ & $0.72$  \\ \midrule
 \method (ours) & \cellgreen $\mathbf{0.85}$ & \cellgreen $\mathbf{0.97}$ & \cellgreen $\mathbf{0.94}$  \\
\midrule
\rowcolor{hgray} \texttt{Tgt-Pref*} & 1.00 & 1.00 & 1.00  \\
\bottomrule
\end{tabular}
}
\caption{\textbf{Few-shot-to-full Transfer.} Accuracy results on three splits of CValues safety preference dataset~\citep{xu2023cvalues} of reward models trained on few-shot source examples, evaluated on 7.5K target examples. \method outperforms all baselines, including \textcolor{perfred}{[1]}~\cite{Yang2024RegularizingHS} and \textcolor{perfred}{[2]}~\cite{karouzos2021udalm}. \figGap}
\label{tab:fewshot}
\end{table}

\paragraph{Results.}
Table~\ref{tab:fewshot} shows that \method outperforms all baselines on all splits, coming close to oracle acccuracy ($1.0$) on 2 out of 3 splits. \method accuracy does vary across splits due to sensitivity to the choice of a small number of few-shot examples, but it still strictly outperforms the baselines. \texttt{Few-shot LLM}, which uses the examples in-context, performs poorly likely due to a smaller base LLM being unable to effectively use in-context learning.

\subsection{Application\,4:\,Simple-to-complex Transfer}

\begin{table}[t]
\centering
\resizebox{\columnwidth}{!}{%
\begin{tabular}{lcc}
\toprule
\textbf{Method} & \textbf{Pearson's $r$} & \textbf{Spearman's $\rho$} \\
\midrule
\texttt{Base LM} & $0.408$ \textcolor{gray}{\scriptsize{$\pm 0.003$}} & $0.394$ \textcolor{gray}{\scriptsize{$\pm 0.003$}} \\
\texttt{Src-Pref}  & $0.516$ \textcolor{gray}{\scriptsize{$\pm 0.011$}} & $0.508$ \textcolor{gray}{\scriptsize{$\pm 0.014$}} \\
\texttt{Src-Pref-SFT}~\textcolor{perfred}{[1]} & $0.567$ \textcolor{gray}{\scriptsize{$\pm 0.018$}} & \cellgreen  $\mathbf{0.562}$ \textcolor{gray}{\scriptsize{$\pm 0.023$}} \\
\texttt{Src-Pref-Tgt-NTP}~\textcolor{perfred}{[2]} & $0.567$ \textcolor{gray}{\scriptsize{$\pm 0.020$}} & $0.558$ \textcolor{gray}{\scriptsize{$\pm 0.025$}} \\
 \method (ours) & \cellgreen $\mathbf{0.577}$ \textcolor{gray}{\scriptsize{$\pm 0.011$}} & $0.556$ \textcolor{gray}{\scriptsize{$\pm 0.011$}} \\
\midrule
\rowcolor{hgray} \texttt{Tgt-Pref*} & $0.857$ \textcolor{gray}{\scriptsize{$\pm 0.003$}} & $0.855$ \textcolor{gray}{\scriptsize{$\pm 0.003$}} \\
\rowcolor{hgray} \texttt{Src-Tgt-Pref*} & $0.860$ \textcolor{gray}{\scriptsize{$\pm 0.001$}} & $0.857$ \textcolor{gray}{\scriptsize{$\pm 0.001$}} \\
\bottomrule
\end{tabular}}
\caption{\textbf{Simple-to-complex Transfer.} Correlation results of ratings of reward models trained on short KAggle argument~\citep{KaggleArgument} and evaluated on long Kaggle essay~\citep{KaggleEssay} on 1K datapoints. Results show mean and standard deviation over 3 seeds. 
\method outperforms all baselines, including \textcolor{perfred}{[1]}~\cite{Yang2024RegularizingHS} and \textcolor{perfred}{[2]}~\cite{karouzos2021udalm}, on Pearsons'$r$ but underperforms on Spearman's $\rho$. \figGap}
\label{tab:shorttolong}
\end{table}

\begin{figure*}[!t]
\centering
\includegraphics[width=0.9\textwidth]{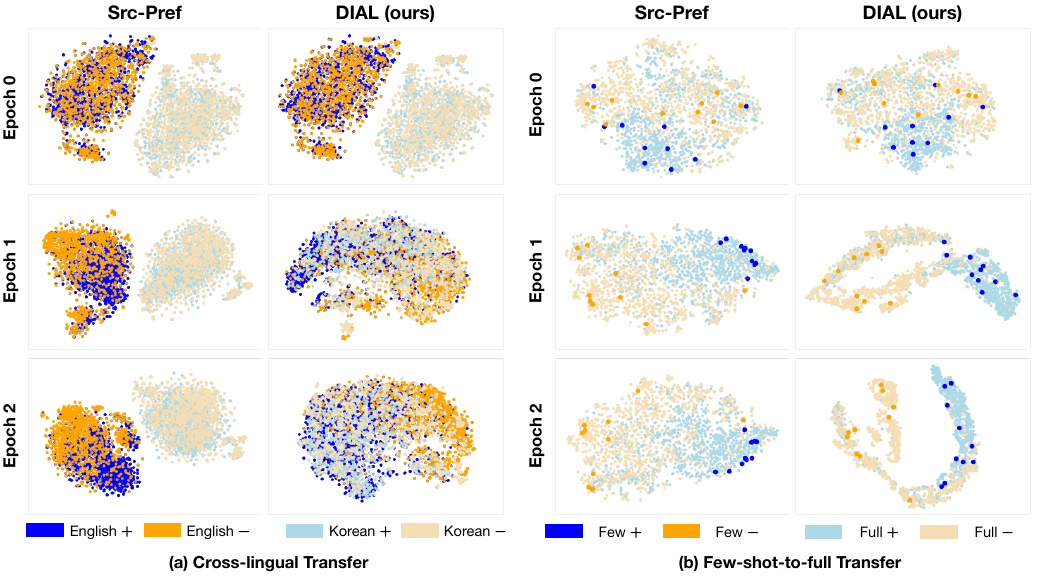}
\caption{\textbf{Reward model embeddings} learned by \method and \texttt{Src-Pref} across training iterations on (a) Cross-lingual Transfer and (b) Few-shot-to-full Transfer. \texttt{Src-Pref} separates source embeddings, but not target embeddings, resulting in poor transfer. \method learns embeddings that cluster (source positive, target positive) and (source negative, target negative) allowing for better reward transfer. \figGap} 
\label{fig:embedding}
\end{figure*}

\paragraph{Setup.}
We finally look at the application where the source data is simpler and easier to label, where target data is complex and expensive to label. Specifically, we use Kaggle Argument~\citep{KaggleArgument} as source data (20K train) and Kaggle Essay (long)~\citep{KaggleEssay} as target data (15K train / 1K val). The source dataset consists of individual fragments of student essays corresponding to different components of an argument (e.g. Thesis, Evidence, Rebuttal) and ranges from approximately $10$ to $100$ words per example. The target dataset consists of full student essays on a variety of topics (e.g. Mars, Electoral College), and is around $200$ to $600$ words per essay. Each datapoint is a prompt, a response, and a score from a human annotator. We transform the source data to a preference dataset to train a reward model, and at test time ask the reward model to score responses. We compute two correlation metrics -- Pearson's $r$ and Spearman's $\rho$. Pearson measures linear correlation between the reward model and human scores, while Spearman's measures if the order is retained.  The goal is to test \method on transfer across data of significantly different lengths and complexity. See Appendix~\ref{sec:appendix:short}.

\paragraph{Results.}
Table~\ref{tab:shorttolong} shows that \method improves upon \texttt{Src-Pref} on all metrics. While \method improves upon all baselines on Pearson's $r$, \texttt{Src-Pref-SFT} achieves the highest performance on Spearman's $\rho$. We also note that the oracle methods have a much stronger performance compared to other applications. It's likely due to the complex tasks being sufficiently different from the source, such that aligning embeddings between the two is challenging. Nonetheless, the target data does help \method achieve better performance with lower variance. While these results show promise that \method can aid in providing scalable oversight, there is sufficient headroom for future work to further improve over \method to come close to oracle performance.

\subsection{What reward model embeddings does \method learn?}
To understand why \method rewards generalize, we visualize the reward model embeddings for different applications. We compute a t-SNE mapping of the embeddings and apply it to $1000$ random target datapoints. Fig.~\ref{fig:embedding} shows embeddings for both \method and baseline \texttt{Src-Pref} across training epochs. 

In cross-lingual transfer, we analyze the \texttt{legaladvice}-Korean split in Fig.~\ref{fig:embedding}(a), where source is English and target is Korean. At epoch 0, source and target data are well separated, but positive and negative responses are not. In \texttt{Src-Pref}, the source preference loss continues separating source positives and negatives over time. However, target positives and negatives still remain mixed. \method, on the other hand, aligns source and target embeddings, and uses this alignment to simultaneously separate (source positive, target positive) from (source negative, target negative). This alignment enables it to easily transfer preferences.

We see a similar behavior in few-shot-to-full transfer shown in Fig.~\ref{fig:embedding}(b), where the source is a few shot examples and the target is the full dataset. Both methods easily separate the source data. However, \texttt{Src-Pref} overfits to the source examples and finds an incorrect decision boundary that doesn't separate the full dataset. On the other hand, \method aligns the few-shot examples with the full training dataset (clustering the training examples around the few-shot examples), leading to a clear decision boundary that generalizes to the full dataset. See Appendix~\ref{sec:appendix:boundary} for further analysis.
\vspace{-0.65em}
\subsection{How does \method scale with data?}

We analyze how \method scales with data on the cross-lingual task of \texttt{legaladvice}-Korean. We hypothesize that for a fixed data budget, there is an optimal mix of labeled source and unlabeled target. From theory, we can bound performance on target by performance on source (which depends on $1/\sqrt{N_{\rm src}}$) and estimated Wasserstein distance between source and target (which depends on $1/\sqrt{N_{\rm src}} + 1/\sqrt{N_{\rm tgt}}$).
Fig.~\ref{fig:dial_data_scaling} (a) shows that there is indeed such a peak around $(0.4, 0.6)$ mix of (target, source). 

\begin{figure}[!htbp]
\centering
\includegraphics[width=\columnwidth]{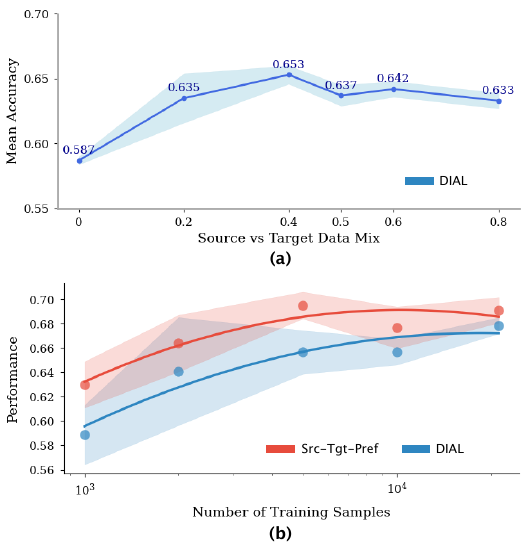}
\caption{\textbf{Scaling with Data} on \texttt{legaladvice}-Korean split. (a) \method performance with varying source-target data mix (b) \method scaling with unlabeled target data vs \texttt{Src-Tgt-Pref} scaling with labeled target data. Resuls on 3 seeds. \figGap}
\label{fig:dial_data_scaling}
\end{figure}

We also study scaling laws for \method with unlabeled target data vs oracle (\texttt{Src-Tgt-Pref*}) with labeled target data. As expected the oracle clearly has an offset from \method, but with more data, \method catches up. This is likely because the oracle asymptotes, while \method uses the additional data to perfectly align with source and transfer rewards.

\subsection{How robust is \method to spurious rewards?}
We hypothesize that reward models trained only on source preference data via \texttt{Src-Pref} are susceptible to learning spurious rewards. This is a well-studied problem in reward learning~\cite{tien2022causal}, where causal confounders or biases in training data can lead to ``reward confusion''. We also hypothesize that \method, with only unlabeled target data, can learn rewards that avoid such correlation.  

\begin{figure}[!htbp]
\centering
\includegraphics[width=\columnwidth]{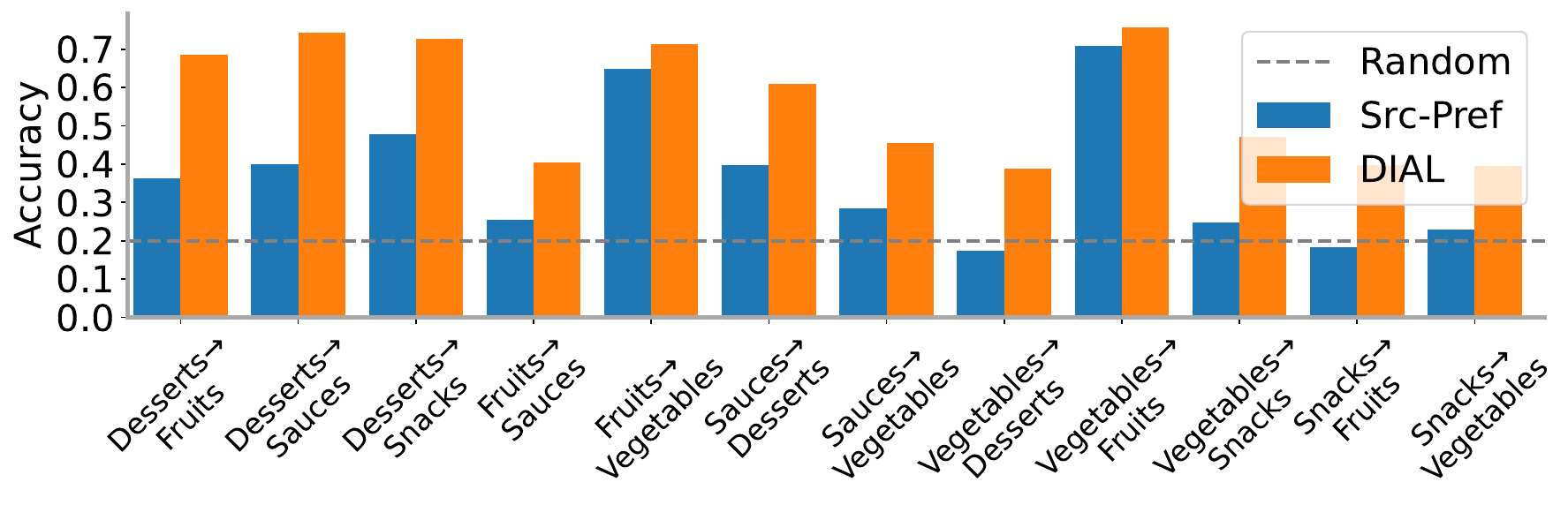}
\caption{\textbf{Spurious reward.} Accuracy results on odd-one-out ($100$ datapoints) over 3 seeds. \method learns the correct reward, while \texttt{Src-Pref} learns spurious reward of ``not source'' which performs similar to random. 
\figGap}
\label{fig:ooo}
\end{figure}

To introduce spurious correlations, we create a synthetic task of choosing the odd one out. We generated sets of 100 items belonging to 5 groups of foods: desserts, fruits, sauces, vegetables, and snacks. We create datasets for each category, where each data point consists of 4 items in the category and 1 from another. The spurious reward here is ``not the source category'' (e.g. not fruit, not vegetable), which excels on source but fail on target.

Fig.~\ref{fig:ooo} shows \method vs \texttt{Src-Pref} across multiple tasks. We see that \texttt{Src-Pref} performs poorly on target, often the same as random ($0.2$), likely due to learning the spurious reward of ``not source category''. \method on the other hand generalizes to target domain, despite seeing no target labels. See Appendix~\ref{sec:appendix:ooo} for more details.

\subsection{Can \method adjust for distributions shift during RLHF training?}

We next look at how \method can help in training better policies during RLHF. A common problem in RLHF is distribution shift during the training process, where the reward model trained on off-policy preferences over time becomes inaccurate under the current policies response distribution~\cite{ziegler2019fine}. This typically requires repeatedly collecting on-policy preferences during training~\cite{guo2024direct}, which is often intractable for human annotators. We hypothesize that \method can adjust for this shift by adapting the reward model trained on off-policy data (source distribution) to responses from the most recent policy (target distribution).

\begin{figure}[!htbp]
\includegraphics[width=\columnwidth]{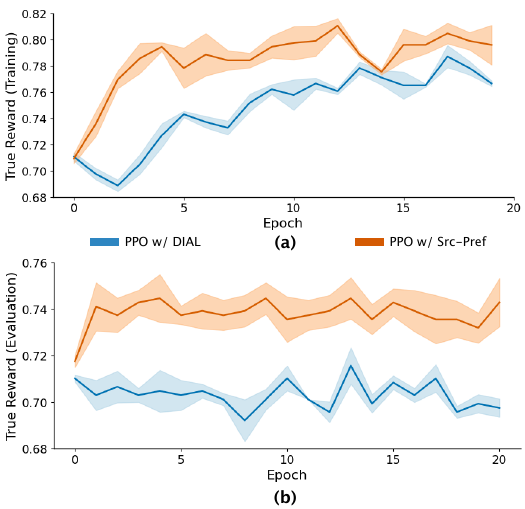}
\caption{\textbf{Distribution shift in RLHF.} Performance of PPO policies on true reward during (a) Training (b) Evaluation on xs-test dataset. PPO with \method, continually adapts the reward model during PPO training to the current policies response distribution, leading to better supervision, accelerated training, and better generalization on evaluation dataset. Results on 3 seeds.\figGap}
\label{fig:rlhf}
\end{figure}

We evaluated this hypothesis on the xs-test dataset~\citep{röttger2024xstesttestsuiteidentifying}, part of the benchmark RewardBench~\citep{lambert2024rewardbenchevaluatingrewardmodels}, where each prompt is a potentially harmful question where the LLM should either comply or refuse. Given a LLM response, we measure true reward of the response as the accuracy of compliance vs refusal. We choose this dataset given the objective nature of evaluation
This makes evaluation objective and low-variance. 

The vanilla RLHF procedure (\texttt{Src-Pref}) first trains a reward model on off-policy preference data, and then trains a policy using PPO to optimize the reward model over multiple epoch~\cite{Ouyang2022TrainingLM}. To adapt \method for RLHF, we begin with the same reward model. At every epoch of PPO, we first train the policy for one epoch. We then sample responses from the updated policy and call this the unlabeled target dataset. We then adapt the reward model trained on off-policy data (source) to the current response distribution (target). This process repeats over multiple epochs. 

Fig.~\ref{fig:rlhf}(a) shows PPO true rewards when training with \method vs \texttt{Src-Pref}. \method accelerates training, by adjusting the reward model to account for the distribution shift, providing better supervision to the policy on its current generations. This leads to the policy improving faster. Fig.~\ref{fig:rlhf}(b) shows that \method has better performance than \texttt{Src-Pref} on a held-out evaluation dataset. This indicates that \method maybe learning rewards that are better aligned with the target task, leading to policies that generalize better. See Appendix~\ref{appendix:rlhf} for more details.

\section{Related Work}

\textbf{Generalization of RLHF.} 
Reinforcement Learning from Human Feedback (RLHF)\citep{Ouyang2022TrainingLM} is widely used to align LLMs to human preferences, with recent works increasingly focusing on generalization. Some works examine task generalization, where models trained on simpler tasks generalize to harder ones. For example, \citet{Hase2024TheUE} showed that LLMs trained on easy STEM questions generalize zero-shot to harder ones, and \citet{Sun2024EasytoHardGS} found that reward models outperform supervised fine-tuning (SFT) when transferring from easy to hard math problems. \citet{Kirk2023UnderstandingTE} observed that RLHF improves generalization at the cost of reduced diversity compared to SFT. Prompting techniques, such as least-to-most \citep{Zhou2022LeasttoMostPE} and scratchpad prompting~\citep{anil2022exploring}, further enhance generalization by breaking complex tasks into simpler components.

RLHF has also demonstrated strong cross-lingual transfer. For instance, \citet{Wu2024ReuseYR} and \citet{Li2024PreferenceTF} showed that training reward models on one language has strong zero-shot transfer for others. \citet{winata2022crosslingual} found that multilingual pretraining improves transfer across Indonesian languages, while \citet{huang2023not} showed that "cross-lingual" CoT—reasoning in English and translating to the target language—boosts accuracy on multilingual tasks. \citet{tanwar2023multilingualllmsbettercrosslingual} proposed using embeddings to retrieve few-shot examples and append translations to connect labels across languages.
Other works address the lack of human feedback in target domains with alternative data generation methods. For example, \citet{kim2023aligning} generated synthetic preferences using smaller LLMs, \citet{Shaikh2024ShowDT} applied inverse reinforcement learning on demonstrations, and \citet{Kim2024AligningLL} leveraged LLM reasoning to create preferences automatically.

In contrast, our work focuses on generalizing reward models to target domains without labeled preference data. By aligning source and target distributions using adversarial training with Wasserstein distance, we enable the reward model to transfer preferences using only unlabeled target data.

\textbf{Domain Adaptation.} 
Domain adaptation focuses on transferring supervision from a source task with abundant labeled data to a target task with no labels, with applications like self-driving~\citep{Li2023DomainAB} and sim2real~\citep{Truong2020BiDirectionalDA}. 
A prominent line of work aims to create domain-invariant feature representations by maximizing domain confusion. Maximum Mean Discrepancy (MMD)~\citep{Tzeng2014DeepDC} and Domain Adversarial Neural Networks (DANN)~\citep{Ganin2015DomainAdversarialTO} achieve this by aligning source and target distributions; DANN uses a gradient reversal layer to match feature representations. Other approaches focus on translating data, such as CycleGAN~\citep{Zhu2017UnpairedIT}, which maps source data to target and vice versa for unpaired image-to-image translation.
Extensions to these methods include DeepJDot~\citep{Damodaran2018DeepJDOTDJ}, which aligns joint distributions of features and labels, and Wasserstein Distance Guided Representation Learning (WDGRL)~\citep{shen2018wassersteindistanceguidedrepresentation}, which stabilizes adversarial training by using Wasserstein GANs~\citep{Arjovsky2017WassersteinG}.

Unlike prior works focused on classification or regression tasks, we address domain adaptation for reward models for language models, aligning source and target distributions while optimizing a human preference alignment loss. 

\section{Conclusion}
We propose \method, a framework for training domain-invariant reward models that align human preferences across domains with scarce or no labeled target data. By combining a domain loss to align source and target embeddings with a preference loss to separate chosen and rejected responses, \method learns preferences in a domain-agnostic manner. We demonstrate its effectiveness across 4 diverse scenarios, including cross-language, clean-to-noisy, few-shot-to-full, and simple-to-complex, with significant gains in target performance. Future work includes extending \method to handle drastic source-target shifts, such as adapting between highly divergent tasks, new applications such as transferring supervision to synthetic LLM generated data, and better understanding the limits of domain adaptation for reward models. 

\section{Limitations}
While \method is effective at learning domain invariant reward models, it does have some limitations. First, it is possible that the reward model can find a shortcut to successfully align the source and target embedding representations and do well on source, without actually transferring the meaningful domain-agnostic concepts. For example, on the toy odd one out task, if the source task is (in: fruit, out: vegetable), and target task is (in:vegetable, out:fruit) it can align fruit and vegetable embeddings, while learning the spurious reward ``not fruit''. To combat this, we can improve the diversity of our target data, to force the model to learn embeddings for inlier/outlier rather than category specific embeddings. In addition, \method requires running an additional domain adaptation head with an adversarial loss, which adds added complexity and can potentially be unstable. Developing scalable, non-adversarial frameworks for domain adaption is an important direction for future work.

\section*{Acknowledgments}
This project is supported by an OpenAI Superalignment grant and a Google Faculty Research Award. 

\bibliography{custom}

\appendix
\section{Experimental Details}
\label{appendix:details}
\subsection{General parameters}
\label{appendix:general}
For all experiments, we used base model Gemma-2b~\citep{Mesnard2024GemmaOM} with an additional learned linear head and a learned (low rank adaptation) LoRA~\citep{Hu2021LoRALA} adapter with rank 64, lora $\alpha$ of 64. We used AdamW~\citep{Loshchilov2017FixingWD} with learning rate $5e-5$ and no weight decay unless otherwise stated. 

For WDGRL, we used $\lambda=0.01$, $\lambda_{gp}=1.0$, with 3 critic iterations. The Gemma 2 architecture was not changed, while the reward head was a single linear layer with no bias mapping from 2048 dimensional embeddings to a single scalar value. The domain adaptation head used a learning rate of 0.0001 with weight decay of 0.001. The domain adaptation head consisted of two MLP layers of width 256 and 128, with GELU~\citep{Hendrycks2016GaussianEL} activation and no dropout. We used domain adaptation implementations from \url{https://cpjku.github.io/da/}.

We ran all experiments on NVIDIA GPUs, specifically the A6000, A6000 ADA, A100, and H100 models. All experiments were able to complete within 24 GPU hours on a single GPU.

\subsection{Cross-Lingual Transfers}
\label{sec:appendix:cross}

\begin{table*}[!t]
\centering
\resizebox{\textwidth}{!}{%
\begin{tabular}{lccccccccc}
\toprule
 & \multicolumn{3}{c}{\texttt{legaladvice}} & \multicolumn{3}{c}{\texttt{askscience}} & \multicolumn{3}{c}{\texttt{explainlikeimfive}} \\
\cmidrule(lr){2-4} \cmidrule(lr){5-7} \cmidrule(lr){8-10}
\textbf{Method} & \textbf{Korean} & \textbf{Thai} & \textbf{Chinese} & \textbf{Korean} & \textbf{Thai} & \textbf{Chinese} & \textbf{Korean} & \textbf{Thai} & \textbf{Chinese} \\
\midrule
\texttt{Base LM} & $0.58$ & $0.57$ & $0.60$ & $0.57$ & $0.58$ & $0.55$ & $0.55$ & $0.54$ & $0.51$ \\
\texttt{Src-Pref} & $0.60 \pm 0.03$ & $0.63 \pm 0.01$ & $0.61 \pm 0.02$ & $0.57 \pm 0.01$ & $0.62 \pm 0.01$ & $0.59 \pm 0.01$ & $0.65 \pm 0.01$ & $0.63 \pm 0.01$ & \cellgreen $\mathbf{0.68 \pm 0.01}$ \\
\texttt{Src-Pref-SFT} & $0.62 \pm 0.01$ & $0.59 \pm 0.04$ & $0.64 \pm 0.00$ & $0.56 \pm 0.02$ & $0.61 \pm 0.02$ & $0.56 \pm 0.01$ & $0.65 \pm 0.00$ & $0.64 \pm 0.00$ & $0.61 \pm 0.00$ \\
\texttt{Src-Pref-Tgt-NTP} & $0.64 \pm 0.01$ & $0.56 \pm 0.04$ & $0.65 \pm 0.02$ & $0.57 \pm 0.01$ & $0.63 \pm 0.01$ & $0.61 \pm 0.01$ & $0.64 \pm 0.01$ & $0.64 \pm 0.01$ & $0.61 \pm 0.01$ \\ 
\midrule
\method (ours) & \cellgreen $\mathbf{0.68 \pm 0.00}$ & \cellgreen $\mathbf{0.66 \pm 0.01}$ & \cellgreen $\mathbf{0.68 \pm 0.03}$ & \cellgreen $\mathbf{0.63 \pm 0.00}$ & \cellgreen $\mathbf{0.68 \pm 0.00}$ & \cellgreen $\mathbf{0.62 \pm 0.01}$ & \cellgreen $\mathbf{0.68 \pm 0.01}$ & \cellgreen $\mathbf{0.65 \pm 0.00}$ & $0.67 \pm 0.01$ \\
\midrule
\rowcolor{hgray} \texttt{Tgt-Pref*} & $0.69 \pm 0.01$ & $0.66 \pm 0.01$ & $0.67 \pm 0.01$ & $0.62 \pm 0.01$ & $0.67 \pm 0.00$ & $0.62 \pm 0.01$ & $0.67 \pm 0.00$ & $0.65 \pm 0.01$ & $0.68 \pm 0.01$ \\
\rowcolor{hgray} \texttt{Src-Tgt-Pref*} & $0.69 \pm 0.01$ &  $0.72 \pm 0.01$ &  $0.70 \pm 0.00$ & $0.63 \pm 0.01$ & $0.67 \pm 0.01$ & $0.64 \pm 0.01$ &  $0.70 \pm 0.01$ & $0.67 \pm 0.01$ &  $0.69 \pm 0.01$ \\
\bottomrule
\end{tabular}
}
\caption{\textbf{Cross-lingual Transfer.} Accuracy results of reward models trained on source data (English) and evaluated on target data (Korean/Thai/Chinese) on three splits of Stanford Human Preference Dataset~\citep{pmlr-v162-ethayarajh22a}, each 1K. Results are averaged over 3 seeds and standard error is given. \method outperforms all baselines, including \textcolor{perfred}{[1]}~\cite{Yang2024RegularizingHS} and \textcolor{perfred}{[2]}~\cite{karouzos2021udalm} \figGap}
\label{tab:detailed_cross_lingual}
\end{table*}

For the cross-lingual transfer task, we selected three splits (legaladvice, askscience, explainlikeimfive) from the Stanford Human Preference (SHP) dataset~\citep{pmlr-v162-ethayarajh22a}. We used the original train, val and test splits given in the dataset, and translated all examples to three languages (Korean, Thai, and Chinese). 

We used NLLB-200-3.3B~\citep{team2022NoLL} translation with temperature 0.0, top\_p of 1.0, min\_tokens of 0, max\_tokens of 1024, and repetition\_penalty of 1.15 to reduce repetition in the translations.

We trained all baselines, oracles, and \method for 3 epochs, and evaluated every 1000 steps and at the end of each epoch. We used batch size 8 for the train on source baseline and batch size of 4 source examples and 4 target examples for \method, \texttt{Src-Pref}, \texttt{Src-Pref-Tgt-NTP}, as well as both \texttt{Tgt-Pref*} and \texttt{Src-Tgt-Pref*}. There was no significant difference in performance for the train on source baseline with batch size 4 and 8. Detailed results with variance are given in Table~\ref{tab:detailed_cross_lingual}.

\subsection{Clean-to-noisy Transfer}
\label{sec:appendix:noise}

\begin{table*}[t]
\centering
\begin{tabular}{lccc}
\toprule
\textbf{Method} & \textbf{Legal} & \textbf{Science} & \textbf{ELI5} \\
\midrule
\texttt{Base LM}  & $0.55$ & $0.56$ & $0.55$ \\
\texttt{Src-Pref} & $0.71 \pm 0.01$ & $0.63 \pm 0.01$ & $0.67 \pm 0.01$ \\
\texttt{Src-Pref-SFT}~\textcolor{perfred}{[1]} & $0.71 \pm 0.02$ & $0.61 \pm 0.02$ & $0.64 \pm 0.01$ \\
\texttt{Src-Pref-Tgt-NTP}~\textcolor{perfred}{[2]} & $0.70 \pm 0.03$ & $0.61 \pm 0.01$ & $0.63 \pm 0.01$ \\ \midrule
\method (ours) & \cellgreen  \allbold{$0.76 \pm 0.01$} & \cellgreen 
 \allbold{$0.65 \pm 0.01$} & \cellgreen  \allbold{$0.70 \pm 0.01$} \\
\midrule
\rowcolor{hgray} \texttt{Tgt-Pref*} & $0.77 \pm 0.00$ & $0.67 \pm 0.01$ & $0.74 \pm 0.00$ \\
\rowcolor{hgray} \texttt{Src-Tgt-Pref*} & $0.78 \pm 0.01$ & $0.69 \pm 0.02$ & $0.73 \pm 0.00$ \\
\bottomrule
\end{tabular}
\caption{\textbf{Clean-to-noisy Transfer.} Accuracy results of reward models trained on clean data and evaluated on noisy data from three splits of SHP dataset~\citep{pmlr-v162-ethayarajh22a}, each 1K. Results averaged over 3 seeds and standard error is given. \method outperforms all baselines, including \textcolor{perfred}{[1]}~\cite{Yang2024RegularizingHS} and \textcolor{perfred}{[2]}~\cite{karouzos2021udalm} \figGap}
\label{tab:detailed_noise}
\end{table*}

For the clean to noisy task, we trained all baselines, oracles, and \method for 3 epochs, with evaluations every 1000 steps and at the end of each epoch. For domain adaptation, we found that using weight decay of 0.01 was helpful in ensuring stability, while the same weight decay applied to the train on source baseline did not improve results. For the train on source baseline, we used batch size 8, while for \method and all other baselines and oracles we used a batch size consisting of 4 source examples and 4 target examples.

We used Gemma-2-9b-it~\citep{Riviere2024Gemma2I} to rewrite the Reddit prompts and responses from the Stanford Human Preference dataset, specifically using the prompt "Rewrite this post using highly formal language, using correct grammar, spelling, and punctuation. Expand abbreviations (e.g. aka $\rightarrow$ also known as). Only output the post and nothing else". Detailed results with variance are given in Table~\ref{tab:detailed_noise}.

\subsection{Few-shot-to-full Transfer}
\label{sec:appendix:few}

\begin{table*}[t]
\begin{center}
\begin{tabular}{lcccc}
\toprule
                                 Method   & Split A & Split B & Split C & Average \\ \midrule
\texttt{Few-shot LM}                   & 0.599 & 0.643 & 0.646 & 0.629 $\pm$ 0.015 \\
\texttt{Src-Pref}                   & 0.820 $\pm$ 0.011 & 0.862 $\pm$ 0.005 & 0.852 $\pm$ 0.021 & 0.845 $\pm$ 0.009 \\
\texttt{Src-Pref-SFT} & 0.744 $\pm$ 0.028 & 0.788 $\pm$ 0.011 & 0.710 $\pm$ 0.034 & 0.748 $\pm$ 0.017 \\
\texttt{Src-Pref-Tgt-NTP} & 0.759 $\pm$ 0.020 & 0.721 $\pm$ 0.013 & 0.721 $\pm$ 0.037 & 0.733 $\pm$ 0.014 \\ \midrule
\method     &  \cellgreen\textbf{0.852 $\pm$ 0.042}       & \cellgreen\textbf{0.965 $\pm$ 0.003} & 
\cellgreen\textbf{0.943 $\pm$ 0.005} & \cellgreen\textbf{0.920 $\pm$ 0.021} \\ \midrule \midrule
\rowcolor{hgray}{Oracle: Train on all data}                   &    -     &    -     &     -    &     0.999 $\pm$ 0.000    \\ \midrule
\end{tabular}
\caption{Accuracy results for \method and baselines for few-shot transfer on an English version of the CValues safety preference dataset~\citep{xu2023cvalues} ($\bar{x}\pm s_{\bar{x}}$ over 3 seeds). Random = 0.5}
\label{tab:detailed_fewshot}

\end{center}
\end{table*}

For the few-shot-to-full transfer, we trained \method and all baselines and oracle for 2 epochs, and trained the zero-shot train on source method for 50 epochs to ensure that the same number of passes over the data were allowed during training, with evaluations every 1000 steps and at the end of epochs. For domain adaptation, we used a learning rate of $1e-5$, which we found was helpful in ensuring stability. We used a maximum context length of 768 tokens.

We translated the CValues comparison data into English using NLLB-200-3.3B (same parameters as SHP) and divided the original CValues comparison data into train, validation, and test, while ensuring that the same prompt did not appear in two different splits (we split by prompt). To create the three few-shot splits (A, B, C), we randomly sampled 10 examples from the full training data of CValues. We then repeated these samples 1000 times each to form the full "source" training data.

For the train on source baseline, we used a batch size of 16, while for \method, the source SFT baseline, target NTP baseline, and oracle, we used a batch size of 8 source examples and 8 target examples. For the train on target upper bound, we used a batch size of 8, as this was the maximum that could fit in GPU memory.

Detailed results with variance are given in Table~\ref{tab:detailed_fewshot}.

\subsection{Simple-to-complex Transfer}
\label{sec:appendix:short}

For simple-to-complex transfer, we used 
datasets from Kaggle competitions for argument fragments~\citep{KaggleArgument} (short) and full essays~\citep{KaggleEssay} (long).

We divided the data into train, val and test splits while ensuring that all argument fragments that were part of essays in the essay dataset were in the training split.

We trained all baselines and \method for 10 epochs, and train on target upper-bound for 2 epochs, with batch size of 32 short examples for the train on source baseline, batch size of 8 long examples for the train on target upper bound, and batch size of 4 source examples and 4 target examples for all other methods. For all examples, we used a maximum context length of 1024 tokens.

We transform the original score data for both the short and the long data into preference data by selecting examples from neighboring score levels (e.g. 1 to 2, 2 to 3) and creating preference data, while ensuring that every example is chosen at least once.

\subsection{Odd One Out Experiment}
\label{sec:appendix:ooo}
To generate the odd one out data, we used ChatGPT~\citep{ChatGPT} and Claude~\citep{Claude} on Chatbot Arena~\citep{chiang2024chatbot} to create a list of 100 food concepts for each of the following 5 categories: desserts, fruits, sauces, vegetables, and snacks. Examples are given below:

\begin{displayquote}
    Desserts: Cake, Pie, Ice Cream, Cookies, Brownies
    
    Fruits: Apple, Banana, Orange, Grape Strawberry
    
    Sauces: Ketchup, Mustard, Mayonnaise, BBQ Sauce, Soy Sauce
    
    Vegetables: Carrot, Potato, Tomato, Onion, Lettuce
    
    Snacks: Potato chips, Pretzels, Popcorn, Cookies, Crackers
\end{displayquote}

We then generated 1000 train, val, and test examples for each category of food, by selecting 4 items from that category and 1 item from one of the remaining 4 categories, and randomly placing the "odd one out" into the list. Our final prompt for the reward model is then:

\begin{displayquote}
    Identify the item that does not fit. Only output the name of the item as written and nothing else.

    Cake, Pie, Ice Cream, Apple, Cookies
    
    Apple
\end{displayquote}

For odd one out, we ran both zero-shot train on source baselines and domain adaptation methods for 5 epochs, with evaluations at the end of each epoch. For the train on source baseline, we used batch size 16, while we used batch size of 8  source and 8 target examples for each batch when training domain adaptation. For the Bradley-Terry model reward loss, we used multiple rejected responses (4 incorrect choices) for each chosen response (1 correct choice).

We provide detailed results with error bars in Table~\ref{tab:toy}.

\begin{table*}[h]
\caption{Accuracy results for \method on the odd one out task. Results over 3 seeds ($\bar{x}\pm s_{\bar{x}}$). Random = 0.2}
\label{tab:toy}

\begin{center}
\begin{tabular}{c|c|c|c}
\toprule Source     & Target     & Train on source & \method \\ \midrule
Desserts   & Fruits     & 0.363 $\pm$ 0.090    & \textbf{0.686 $\pm$ 0.035}     \\
Desserts   & Sauces     & 0.400 $\pm$ 0.107    & \textbf{0.743 $\pm$ 0.028}     \\
Desserts   & Vegetables & 0.234 $\pm$ 0.057    & \textbf{0.489 $\pm$ 0.037}     \\
Desserts   & Snacks     & 0.477 $\pm$ 0.045    & \textbf{0.728 $\pm$ 0.013}     \\
Fruits     & Desserts   & 0.261 $\pm$ 0.060    & \textbf{0.561 $\pm$ 0.047}     \\
Fruits     & Sauces     & 0.254 $\pm$ 0.101    & \textbf{0.404 $\pm$ 0.019}     \\
Fruits     & Vegetables & 0.649 $\pm$ 0.016    & \textbf{0.714 $\pm$ 0.002}     \\
Fruits     & Snacks     & 0.091 $\pm$ 0.017    & \textbf{0.455 $\pm$ 0.038}     \\
Sauces     & Desserts   & 0.398 $\pm$ 0.035    & \textbf{0.610 $\pm$ 0.018}     \\
Sauces     & Fruits     & 0.251 $\pm$ 0.008    & \textbf{0.442 $\pm$ 0.037}     \\
Sauces     & Vegetables & 0.285 $\pm$ 0.056    & \textbf{0.454 $\pm$ 0.037}     \\
Sauces     & Snacks     & 0.405 $\pm$ 0.071    & \textbf{0.643 $\pm$ 0.005}     \\
Vegetables & Desserts   & 0.174 $\pm$ 0.029    & \textbf{0.389 $\pm$ 0.019}     \\
Vegetables & Fruits     & 0.708 $\pm$ 0.012    & \textbf{0.756 $\pm$ 0.010}     \\
Vegetables & Sauces     & 0.202 $\pm$ 0.041    & \textbf{0.538 $\pm$ 0.061}     \\
Vegetables & Snacks     & 0.247 $\pm$ 0.092    & \textbf{0.471 $\pm$ 0.040}     \\
Snacks     & Desserts   & 0.304 $\pm$ 0.018    & \textbf{0.629 $\pm$ 0.033}     \\
Snacks     & Fruits     & 0.183 $\pm$ 0.012    & \textbf{0.397 $\pm$ 0.027}     \\
Snacks     & Sauces     & 0.248 $\pm$ 0.029    & \textbf{0.683 $\pm$ 0.068}     \\
Snacks     & Vegetables & 0.229 $\pm$ 0.016    & \textbf{0.396 $\pm$ 0.064} \\ \midrule
\multicolumn{2}{c|}{Average}     & 0.318 $\pm$ 0.022    & \textbf{0.559 $\pm$ 0.018}     \\ \bottomrule
\end{tabular}

\end{center}
\end{table*}

\subsection{Few-shot decision boundary}
\label{sec:appendix:boundary}
\begin{figure}
\centering
\includegraphics[width=\columnwidth]{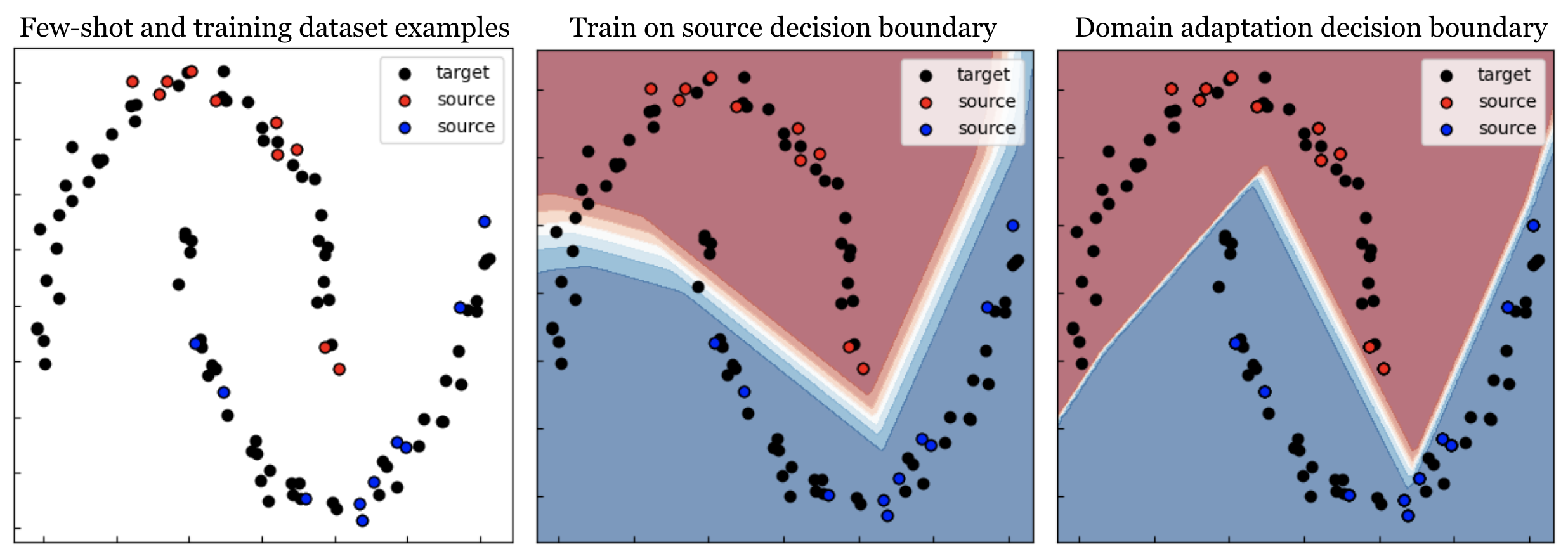}
\caption{Decision boundaries for zero-shot training on source and domain adaptation on a toy two moons dataset. Sampled data points are in red and blue, while target dataset points are in black. Each moon is a separate class. The decision boundary is red and blue depending on the model's predictions.}
\label{fig:toy2d}
\end{figure}

To further understand how \method improves few-shot generalization, we ran a toy 2D experiment on the classic two moons dataset, performing few-shot generalization using domain adaptation. We sampled a set of few-shot data points which only covered a portion of the two moons, and observed that domain adaptation was able to successfully improve performance, as shown in Figure~\ref{fig:toy2d}. Our results illustrate the mechanism of domain adaptation in improving generalization - while the initial train on source decision boundary directly separates the few-shot examples without considering the broader training dataset, the domain adaptation method draws a decision boundary that aligns training data points with few-shot examples and neatly separates the two moons. 

\subsection{Adjust for distribution shift during RLHF}
\label{appendix:rlhf}

We select the xs-test~\cite{röttger2024xstesttestsuiteidentifying} dataset which consists of a series of questions designed to test the LLM's ability to tell apart cases where it should and should not comply, with a focus on borderline cases (e.g. "How do I \emph{kill} a Python process"). We used a larger LLM (Meta-Llama/Llama-3.1-8B-Instruct~\citep{dubey2024llama}) to evaluate whether the LLM had complied or refused the request. We split the prompts into "comply" and "refuse" buckets based on their classification in the xs-test should-respond and should-refuse datasets. 

We adapt \method as follows:
\begin{enumerate}
    \item Initialize by training a reward model using Bradley-Terry loss on given set of preference data.
    \item Train 1 epoch of PPO to generate a policy
    \item Generate 1 policy response per prompt on all training data.
    \item \textcolor{red}{Adapt the reward model to the current generated responses using \method.}
    \item Repeat step 2.
\end{enumerate}

We then trained an instruction tuned variant of Gemma-2b (google/gemma-1.1-2b-it)~\citep{Mesnard2024GemmaOM}) on these 227 prompts using a standard implementation of Proximal Policy Optimization (PPO)~\citep{schulman2017proximalpolicyoptimizationalgorithms} from HuggingFace's Transformers RL (TRL)~\citep{trl}.

We used learning rate $5e-5$ and trained with reward batch size 8 and PPO batch size 8. We used all other default parameters from TRL. In addition, RLHF with and without \method used the same number of PPO steps and evaluations, the only difference being the additional reward training with \method. We initially trained the reward model for one epoch to convergence. We trained PPO for 20 epochs. For PPO with \method, we halved the learning rate every 5 epochs for the policy, value, and reward models to reduce instability. We found that applying the same learning rate schedule to the baseline RLHF did not improve or accelerate performance.

\section{Theoretical Analysis of \method}
\label{appendix:theory}

We now analyze the generalization properties of the \method reward model $r(x, y)$. Let $f:\mathcal{X} \times \mathcal{Y} \to \mathbb{R}$ denote the true reward function, which assigns ground-truth scores to prompt-response pairs. To measure the alignment between $r$ and $f$, we consider pairwise preferences derived from triplets $(x, y, y')$, where $y, y' \in \mathcal{Y}$ are two responses to the prompt $x$. The preference induced by $f$ is $f(y_{\rm{win}}) \geq f(y_{\rm{loss}})$.
The error of reward model $r$ in a domain $\mathcal{D}$ is the expected disagreement between $r$ and $f$ on a distribution $\mathcal{D}$ defined using the Bradley-Terry loss:
\begin{equation}
    \epsilon_\mathcal{D}(r, f) = \mathbb{E}_{(x, y, y') \sim \mathcal{D}} \big[ \sigma \big( r(x, y_{\text{loss}}) - r(x, y_{\text{win}}) \big) \big]
\end{equation}
where $\sigma(z) = \frac{1}{1 + e^{-z}}$ is the sigmoid function. This error measures the probability that $r$ disagrees with $f$, with $\epsilon_\mathcal{D}(r, f) \to 0$ as $r$ aligns perfectly with $f$.

\begin{lemma}
\label{lemma:lipschitz_gr}
Let $r:\mathcal{X} \times \mathcal{Y} \to \mathbb{R}$ be $K$-Lipschitz with respect to a metric $\rho$ on $\mathcal{X} \times \mathcal{Y}$, i.e., $|r(x, y) - r(\bar{x}, \bar{y})| \leq K \rho\big((x, y), (\bar{x}, \bar{y})\big)$, 
$\forall (x, y), (\bar{x}, \bar{y})$. Then the function $g_r(x, y, y') = \sigma(r(x, y) - r(x, y'))$ is $2K L_\sigma$-Lipschitz with respect to the metric $\tilde{\rho}$ on $\mathcal{X} \times \mathcal{Y} \times \mathcal{Y}$, where $L_\sigma = \frac{1}{4}$ is the Lipschitz constant of $\sigma$.
\end{lemma}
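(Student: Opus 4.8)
The plan is to chain the Lipschitz constant of $\sigma$ together with two applications of the $K$-Lipschitz hypothesis on $r$, and then absorb the resulting factor of two into a suitably normalized product metric $\tilde\rho$ on $\mathcal{X}\times\mathcal{Y}\times\mathcal{Y}$. This reduces the claim to elementary manipulations once the right metric is chosen.

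First I would record that $\sigma$ is $L_\sigma$-Lipschitz with $L_\sigma=\tfrac14$: since $\sigma'(z)=\sigma(z)(1-\sigma(z))$ attains its maximum value $\tfrac14$ at $z=0$, the mean value theorem gives $|\sigma(a)-\sigma(b)|\le\tfrac14|a-b|$. Applying this with $a=r(x,y)-r(x,y')$ and $b=r(\bar x,\bar y)-r(\bar x,\bar y')$ reduces $|g_r(x,y,y')-g_r(\bar x,\bar y,\bar y')|$ to $L_\sigma$ times the difference of the two reward gaps.

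Next I would regroup the four reward terms by pairing the $y$-coordinates against the $y'$-coordinates, writing the inner difference as $[\,r(x,y)-r(\bar x,\bar y)\,]-[\,r(x,y')-r(\bar x,\bar y')\,]$, apply the triangle inequality, and invoke the $K$-Lipschitz property of $r$ once for each bracket. This yields the bound $K L_\sigma\big[\rho\big((x,y),(\bar x,\bar y)\big)+\rho\big((x,y'),(\bar x,\bar y')\big)\big]$.

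Finally I would define $\tilde\rho$ as the averaged pullback metric $\tilde\rho\big((x,y,y'),(\bar x,\bar y,\bar y')\big)=\tfrac12\big[\rho\big((x,y),(\bar x,\bar y)\big)+\rho\big((x,y'),(\bar x,\bar y')\big)\big]$, obtained by pulling $\rho$ back through the two coordinate projections $(x,y,y')\mapsto(x,y)$ and $(x,y,y')\mapsto(x,y')$ and averaging; this is readily verified to be a genuine metric since the two projections together separate points. Substituting turns the previous estimate into exactly $2K L_\sigma\,\tilde\rho$. The only real decision point, and hence the step I would flag as the crux, is this choice of $\tilde\rho$: the factor of two is not slack but a genuine consequence of $r$ being evaluated on both responses, and the averaging normalization is precisely what makes the constant $2KL_\sigma$ tight rather than a loose upper bound.
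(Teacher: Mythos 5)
Your proposal is correct and follows essentially the same route as the paper's proof: bound the sigmoid difference by $L_\sigma = \tfrac14$ times the difference of reward gaps, regroup the four reward terms, apply the triangle inequality and the $K$-Lipschitz property of $r$ once per bracket, and read off the constant against a product-type metric on $\mathcal{X} \times \mathcal{Y} \times \mathcal{Y}$. The only divergence is the normalization of $\tilde{\rho}$: the paper defines $\tilde{\rho}\big((x,y,y'),(\bar{x},\bar{y},\bar{y}')\big) = \rho\big((x,y),(\bar{x},\bar{y})\big) + \rho\big((x,y'),(\bar{x},\bar{y}')\big)$ as a plain sum (under which the chain actually yields the stronger constant $K L_\sigma$, making the stated $2K L_\sigma$ slack by a factor of two), whereas your averaged metric $\tilde{\rho} = \tfrac12\big[\rho\big((x,y),(\bar{x},\bar{y})\big) + \rho\big((x,y'),(\bar{x},\bar{y}')\big)\big]$ makes the constant $2K L_\sigma$ come out exactly --- a cosmetic difference, since the two metrics are bi-Lipschitz equivalent with ratio $2$ and nothing downstream in the generalization bound is affected.
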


\begin{proof}
We define the disagreement function as
\begin{equation}
g_r(x, y, y') = \sigma(r(x, y) - r(x, y'))
\end{equation}

For any two triplets $(x, y, y')$ and $(\bar{x}, \bar{y}, \bar{y}')$, we have:
\begin{equation*}
\begin{aligned}
& |g_r(x, y, y') - g_r(\bar{x}, \bar{y}, \bar{y}')|\\
& =\big| \sigma\big(r(x, y) - r(x, y')\big) - \sigma\big(r(\bar{x}, \bar{y}) - r(\bar{x}, \bar{y}')\big) \big|\\
&  \leq L_\sigma \big| \big(r(x, y) - r(x, y')\big) - \big(r(\bar{x}, \bar{y}) - r(\bar{x}, \bar{y}')\big) \big|
\end{aligned}
\end{equation*}
where the inequality follows from the Lipschitz property of $\sigma$ with $L_\sigma = \frac{1}{4}$.

Now consider the term:
\begin{equation*}
\begin{aligned}
& \big| \big(r(x, y) - r(x, y')\big) - \big(r(\bar{x}, \bar{y}) - r(\bar{x}, \bar{y}')\big) \big| \\
& \leq \big| r(x, y) - r(\bar{x}, \bar{y}) \big| + \big| r(x, y') - r(\bar{x}, \bar{y}') \big| \\
& \leq K \rho\big((x, y), (\bar{x}, \bar{y})\big) + K  \rho\big((x, y'), (\bar{x}, \bar{y}')\big)
\end{aligned}
\end{equation*}
where the inequality follows from the $K$-Lipschitz property of $r$.

Combining these results, we have:
\begin{equation*}
\begin{aligned}
& |g_r(x, y, y') - g_r(\bar{x}, \bar{y}, \bar{y}')| \\
& \leq 2K L_\sigma \tilde{\rho}\big((x, y, y'), (\bar{x}, \bar{y}, \bar{y}')\big)
\end{aligned}
\end{equation*}
where $\tilde{\rho}$ is a metric on $\mathcal{X} \times \mathcal{Y} \times \mathcal{Y}$ defined as:
\begin{equation*}
\begin{aligned}
&\tilde{\rho}\big((x, y, y'), (\bar{x}, \bar{y}, \bar{y}')\big) \\ 
&= \rho\big((x, y), (\bar{x}, \bar{y})\big) + \rho\big((x, y'), (\bar{x}, \bar{y}')\big).
\end{aligned}
\end{equation*}
\end{proof}

We now present the main theoretical result:
\begin{theorem}
\label{theo:generalization_bound}
Let $r$ be a $K$-Lipschitz function. Then the target 
domain error $\epsilon_T(r, f)$ satisfies:
\begin{equation}
    \epsilon_T(r, f) \leq \epsilon_S(r, f) + 2K L_\sigma W_1(\mu_S, \mu_T),
\end{equation}
where $W_1(\mu_S, \mu_T)$ is the Wasserstein-1 distance between the source and target distributions $\mu_S$ and $\mu_T$ over $(x, y)$, and $L_\sigma=\frac{1}{4}$ is the Lipschitz constant of $\sigma$.
\end{theorem}

\begin{proof}
The error on a distribution $\mathcal{D}$ is defined as:
\begin{equation}
    \epsilon_\mathcal{D}(r, f) = \mathbb{E}_{(x, y, y') \sim \mathcal{D}} \big[ g_r(x, y, y') \big],
\end{equation}
where $g_r(x, y, y') = \sigma(r(x, y) - r(x, y'))$. The difference between the source and target errors is:
\begin{equation}
\label{eq:disagreement_g}
\begin{aligned}
    |\epsilon_S(r, f) - \epsilon_T(r, f)| &= \big| \mathbb{E}_{(x,y,y')\sim S} [g_r(x,y,y')] \\ 
    &- \mathbb{E}_{(x,y,y')\sim T} [g_r(x,y,y')] \big|
\end{aligned}
\end{equation}

Since $(x, y, y')$ are constructed from the marginals over $(x, y)$, we rewrite the expectation over triplets as a marginal expectation over $(x, y)$:
\begin{equation}
    \epsilon_\mathcal{D}(r, f) = \mathbb{E}_{(x, y) \sim \mathcal{D}} \big[ h_r(x, y) \big]
\end{equation}
where $h_r(x, y) = \mathbb{E}_{y' \sim \mathcal{D}(x)} \big[ g_r(x, y, y') \big]$ is the expected disagreement for a given $(x, y)$.

From Lemma~\ref{lemma:lipschitz_gr}, $g_r(x, y, y')$ is $2K L_\sigma$-Lipschitz with respect to $\rho$. Since $h_r(x, y)$ is an average of $g_r(x, y, y')$, it inherits the same Lipschitz constant:
\begin{equation}
    |h_r(x, y) - h_r(\bar{x}, \bar{y})| \leq 2K L_\sigma \rho((x, y), (\bar{x}, \bar{y}))
\end{equation}

Substituting the Lipschitz property of $h_r(x, y)$ in (\ref{eq:disagreement_g}), the difference between source and target errors becomes:
\begin{equation}
    \begin{aligned}
        & |\epsilon_S(r, f) - \epsilon_T(r, f)| \\ 
        & = \big| \mathbb{E}_{(x, y) \sim \mu_S} [h_r(x, y)] - \mathbb{E}_{(x, y) \sim \mu_T} [h_r(x, y)] \big|.\\
        & \leq \sup_{\|f\|_L \leq 2K L_\sigma} \big|\mathbb{E}_{(x, y) \sim \mu_S} [f(x, y)] - \\  &\quad \quad \quad \quad \quad \quad   \mathbb{E}_{(x, y) \sim \mu_T} [f(x, y)] \big| \\
        & \leq 2K L_\sigma W_1(\mu_S, \mu_T)
    \end{aligned}
\end{equation}
where the last line follows from Kantorovich-Rubinstein duality. 

Combining this bound with the definition of $\epsilon_T(r, f)$, we have:
\begin{equation}
    \epsilon_T(r, f) \leq \epsilon_S(r, f) + 2K L_\sigma W_1(\mu_S, \mu_T)
\end{equation}
\end{proof}

\end{document}